\newtheorem{theorem}{Theorem}
\newtheorem{proposition}{Proposition}
\newcommand{\x}{x}                   
\newcommand{\xspace}{{\cal X}}       
\newcommand{\y}{y}                   
\newcommand{\ystar}{\y^{*}}          
\newcommand{\Y}{{\cal Y}}            
\newcommand{\sample}{{\cal D}}       
\newcommand{\n}{n}                   
\newcommand{\N}{d}                   
\newcommand{\ppty}{p}                
\newcommand{\hypspace}{{\cal H}}     
\newcommand{\h}{h}                   
\newcommand{\hp}[1]{h_{#1}}	     
\newcommand{\w}{w}                   
\newcommand{\piexp}{h_0}             
\newcommand{\piexpp}[1]{h_0(#1)}     
\newcommand{\hcrm}{\hat{h}^{CRM}}    
\newcommand{\Loss}{{\Delta}}         
\newcommand{\loss}{{\delta}}         
\newcommand{\risk}[1]{R(#1)}
\newcommand{\riskemp}[1]{\hat{R}^M(#1)}
\newcommand{\riskempunclipped}[1]{\hat{R}(#1)}
\newcommand{\algname}{Policy Optimizer for Exponential Models}
\newcommand{\algnames}{POEM}  
\DeclareMathOperator*{\argmax}{argmax}
\DeclareMathOperator*{\argmin}{argmin}
\icmltitlerunning{Counterfactual risk minimization}
\begin{document} 

\twocolumn[
\icmltitle{Counterfactual Risk Minimization: Learning from Logged Bandit Feedback}

\icmlauthor{Adith Swaminathan}{adith@cs.cornell.edu}
\icmladdress{Cornell University,
            Ithaca, NY 14853 USA}
\icmlauthor{Thorsten Joachims}{tj@cs.cornell.edu}
\icmladdress{Cornell University,
            Ithaca, NY 14853 USA}

\icmlkeywords{empirical risk minimization, bandit feedback, importance sampling,
Bernstein bound, propensity, majorization}

\vskip 0.3in
]

\begin{abstract} 
We develop a learning principle and an efficient algorithm
for batch learning from logged bandit feedback.
This learning setting is ubiquitous in online systems
(e.g., ad placement, web search, recommendation),
where an algorithm makes a prediction (e.g., ad ranking)
for a given input (e.g., query)
and observes bandit feedback (e.g., user clicks on presented ads).
We first address the counterfactual nature of the learning problem through propensity scoring.
Next, we prove generalization error bounds that account
for the variance of the propensity-weighted empirical risk estimator.
These constructive bounds give rise to the Counterfactual Risk Minimization (CRM) principle.
We show how CRM can be used to derive a new
learning method -- called \algname\ (\algnames) -- for
learning stochastic linear rules for structured output prediction.
We present a decomposition of the \algnames\ objective that enables
efficient stochastic gradient optimization.
\algnames\ is evaluated on several multi-label classification problems
showing substantially improved robustness and generalization performance compared to the state-of-the-art.
\end{abstract}

\section{Introduction}
\label{sec:introduction}
Log data is one of the most ubiquitous forms of data available,
as it can be recorded from a variety of systems
(e.g., search engines, recommender systems, ad placement) at little cost.
The interaction logs of such systems typically contain a record of
the input to the system (e.g., features describing the user),
the prediction made by the system (e.g., a recommended list of news articles)
and the feedback (e.g., number of ranked articles the user read) \cite{Li2010}.
The feedback, however, provides only partial information -- ``bandit feedback''--
limited to the particular prediction shown by the system.
The feedback for all the other predictions the system could have made is typically not known.
This makes learning from log data fundamentally different from supervised learning,
where ``correct'' predictions (e.g., the best ranking of news articles for that user)
together with a loss function provide full-information feedback.

We study the problem of batch learning from logged bandit feedback.
Unlike online learning with bandit feedback,
batch learning does not require interactive experimental control over the system.
Furthermore, it enables the reuse of existing data and offline cross-validation techniques for model selection
(e.g., ``should we perform feature selection?'', ``which learning algorithm to use?'', etc.).

To solve this batch-learning problem, we first need a \emph{counterfactual} estimator \cite{Bottou2013}
of a system's performance, so that we can estimate
how other systems would have performed if they had been in control of choosing predictions.
Such estimators have been developed recently for the off-policy 
evaluation problem \cite{Dudik2011}, \cite{Li2011}, \cite{Li2014},
where data collected from the interaction logs of one bandit
algorithm is used to evaluate another system.

Our approach to counterfactual learning centers around the insight that,
to perform robust learning, it is not sufficient
to have just an unbiased estimator of the off-policy system's performance.
We must also reason about how the variances of these estimators
differ across the hypothesis space,
and pick the hypothesis that has the best possible guarantee
(tightest conservative bound) for its performance.
We first prove generalization error bounds analogous
to structural risk minimization \cite{Vapnik1998} for a \emph{stochastic hypothesis} family
using an empirical Bernstein argument \cite{Maurer2009}.
The constructive nature of these bounds suggests a general principle
-- Counterfactual Risk Minimization (CRM) --
for designing methods for batch learning from bandit feedback.

Using the CRM principle,
we derive a new learning algorithm -- \algname\ (\algnames) --
for structured output prediction.
The training objective is decomposed
using repeated variance linearization,
and optimizing it using AdaGrad \cite{Duchi2011} yields a fast and effective algorithm.
We evaluate \algnames\  on
several multi-label classification problems,
verify that its empirical performance supports the theory,
and demonstrate substantial improvement
in generalization performance over the state-of-the-art.

We review existing approaches in Section~\ref{sec:related}.
The learning setting is detailed in Section~\ref{sec:blsetting},
and contrasted with supervised learning.
In Section~\ref{sec:blcrm}, we derive the Counterfactual Risk Minimization learning principle
and provide a rule of thumb for setting hyper-parameters.
In Section~\ref{sec:blalgorithm}, we instantiate the CRM principle
for structured output prediction using exponential models
and construct an efficient decomposition of the objective for stochastic optimization.
Empirical evaluations are reported in Section~\ref{sec:experiments}
and we conclude with future directions and discussion in Section~\ref{sec:conclusion}.

\section{Related Work}
\label{sec:related}
Existing approaches for batch learning from
logged bandit feedback fall into two categories. The first
approach is to reduce the problem to supervised learning.
In principle, since the logs give us an incomplete view of the feedback for different predictions,
one could first use regression to estimate a feedback oracle for unseen predictions,
and then use any supervised learning algorithm using this feedback oracle.
Such a two-stage approach is known to not generalize well \cite{Beygelzimer2009}. 
More sophisticated techniques using a cost weighted classification \cite{Zadrozny2003}
or the Offset Tree algorithm \cite{Beygelzimer2009}
allow us to perform batch learning when the space of possible predictions is small.
In contrast, our approach generalizes structured output prediction,
with exponential-sized prediction spaces.

The second approach to batch learning from bandit feedback uses
propensity scoring \cite{Rosenbaum1983} to
derive unbiased estimators from the interaction logs \cite{Bottou2013}.
These estimators are used for a small set
of candidate policies, and the best estimated candidate is picked via exhaustive search.
In contrast, our approach can be optimized via gradient descent,
over hypothesis families (of infinite size)
that are equally as expressive as those used in supervised learning.

Our approach builds on counterfactual estimators
that have been developed for off-policy evaluation.
The inverse propensity scoring estimator can be optimal
when we have a good model of the historical algorithm \cite{Strehl2010}, \cite{Li2014}, \cite{Li2015},
and doubly robust estimators are even more efficient 
when we additionally have a good model of the feedback \cite{Dudik2011}.
In our work, we focus on the inverse propensity scoring estimator,
but the results we derive hold equally for the doubly robust estimators.
Recent work \cite{Thomas2015} has additionally developed
tighter confidence bounds for counterfactual estimators, which can be
directly co-opted in our approach to counterfactual learning.

In the current work, we concentrate on the case where
the historical algorithm was a stationary, stochastic policy.
Techniques like exploration scavenging \cite{Langford2008} and bootstrapping \cite{Mary2014}
allow us to perform counterfactual evaluation
even when the historical algorithm was deterministic or adaptive.

Our strategy of picking the hypothesis with the tightest conservative bound
on performance mimics similar successful approaches in other problems like
supervised learning \cite{Vapnik1998}, risk averse multi-armed bandits \cite{Galichet2013},
regret minimizing contextual bandits \cite{Langford2008b} and reinforcement learning \cite{Garcia2012}.

Beyond the problem of batch learning from bandit feedback,
our approach can have implications for several applications that
require learning from logged bandit feedback data:
warm-starting multi-armed bandits \cite{Shivaswamy2012} and contextual bandits \cite{Strehl2010},
pre-selecting retrieval functions for search engines \cite{Hofmann2013},
and policy evaluation for contextual bandits \cite{Li2011}, to name a few.

\section{Learning Setting: Batch Learning with Logged Bandit Feedback}
\label{sec:blsetting}
\begin{table*}[ht]
\caption{Comparison of assumptions, hypotheses and learning principles for supervised learning and batch learning with bandit feedback.}
\label{tab:SRMvsCRM}
\vskip -0.15in
\begin{center}
\begin{small}
\begin{tabular}{lccccl}
\hline
\abovespace\belowspace
Setting & Distribution & Data, $\sample$ & Hypothesis, $\h$ & Loss & Learning principle \\
\hline
\abovespace\belowspace
Supervised	& $(\x,\!\ystar)\!\sim\!\Pr(\xspace\!\times\!\Y)$ & $\{ \x_i,\!\ystar_i \}$ & $\y\!=\!\h(\x)$ & $\Loss(\ystar, \cdot)$ known & $\argmin_{\h} \riskempunclipped{\h} + C \cdot Reg(\hypspace)$ \\
\abovespace\belowspace
Batch w/bandit	& $\x\!\sim\!\Pr(\xspace), \y\!\sim\!\piexpp{\x}$ & $\{ \x_i,\!\y_i,\!\loss_i,\!\ppty_i \}$ & $\y\!\sim\!\h(\Y\!\mid\!\x)$ & $\loss(x, \cdot)$ unknown & $\argmin_{\h} \riskemp{\h} + \lambda \cdot \sqrt{\frac{{\bm{Var}(\h)}}{n}}$\\
\hline
\end{tabular}
\end{small}
\end{center}
\vspace{-0.2in}
\end{table*}
Consider a structured output prediction problem that
takes as input $\x \in \xspace$ and outputs a prediction $\y \in \Y$.
For example, in multi-label document classification, $\x$ could be a news article
and $\y$ a bitvector indicating the labels assigned to this article.
The inputs are assumed drawn from a fixed but unknown distribution $\Pr(\xspace)$,
$\x \overset{i.i.d.}{\sim} \Pr(\xspace)$.
Consider a hypothesis space $\hypspace$ of \emph{stochastic policies}.
A hypothesis $\h(\Y \mid \x) \in \hypspace$ defines a probability distribution
over the output space $\Y$,
and the hypothesis makes predictions by \emph{sampling}, $\y \sim \h(\Y \mid \x)$.
Note that this definition also includes deterministic hypotheses, where
the distributions assign probability $1$ to a single $\y$.
For notational convenience, denote $\h(\Y \mid \x)$ by $\h(\x)$, and 
the probability assigned by $\h(\x)$ to $\y$ as $\h(\y \mid \x)$.

In interactive learning systems, we only observe feedback $\loss(\x, \y)$
for the $\y$ sampled from $\h(\x)$.
In this work, feedback $\loss : \xspace \times \Y \mapsto \mathbb{R}$ is
a cardinal loss that is only observed at the sampled data points.
Small values for $\loss(\x, \y)$ indicate user satisfaction with $\y$ for $\x$,
while large values indicate dissatisfaction.
The expected loss -- called risk -- of a hypothesis $\risk{\h}$ is defined as,
\begin{equation*}
 \risk{\h} = \mathbb{E}_{\x \sim \Pr(\xspace)} \mathbb{E}_{\y \sim \h(\x)} \left[ \loss(\x, \y) \right].
\end{equation*}
The goal of the system is to minimize risk, or equivalently, maximize expected user satisfaction.
The aim of learning is to find a hypothesis $\h \in \hypspace$ that has minimum risk.

We wish to re-use the interaction logs of these systems for batch learning.
Assume that its historical algorithm acted according to a \emph{stationary} policy $\piexpp{\x}$
(also called logging policy).
The data collected from this system is 
\begin{equation*}
\sample = \{ (\x_1,\y_1,\loss_1), \dots, (\x_\n,\y_\n,\loss_\n) \},
\end{equation*}
where $\y_i \sim \piexpp{\x_i}$ and $\loss_i \equiv \loss(\x_i, \y_i)$.

\paragraph{Sampling bias.} $\sample$ cannot be used to estimate $\risk{\h}$ for a new hypothesis $\h$
using the estimator typically used in supervised learning.
We ideally need either full information about $\loss(x_i, \cdot)$
or need samples $\y \sim \h(\x_i)$ to directly estimate $\risk{\h}$.
This explains why, in practice, model selection over a small set of candidate systems
is typically done via A/B tests, where the candidates are deployed to collect new data
sampled according to $\y \sim \h(\x)$ for each hypothesis $\h$.
A relative comparison of the assumptions, hypotheses, and principles used in supervised learning vs. our learning
setting is outlined in Table~\ref{tab:SRMvsCRM}.
Fundamentally, batch learning with bandit feedback is hard because $\sample$ is both
\emph{biased} (predictions favored by the historical algorithm will be over-represented) 
and \emph{incomplete} (feedback for other predictions will not be available) for learning.

\section{Learning Principle: Counterfactual Risk Minimization}
\label{sec:blcrm}
The distribution mismatch between $\piexp$ and any hypothesis $\h \in \hypspace$
can be addressed using importance sampling, which corrects the sampling bias as:
\begin{align*}
 \risk{\h} &= \mathbb{E}_{\x \sim \Pr(\xspace)} \mathbb{E}_{\y \sim \h(\x)} \left[ \loss(\x, \y) \right]\\
 &= \mathbb{E}_{\x \sim \Pr(\xspace)} \mathbb{E}_{\y \sim \piexpp{\x}} \left[ \loss(\x, \y) \frac{\h(\y \mid \x)}{\piexpp{\y \mid \x}} \right].
\end{align*}
This motivates the propensity scoring approach \cite{Rosenbaum1983}.
During the operation of the logging policy,
we keep track of the propensity, $\piexpp{\y\!\mid\!\x}$ of the historical system to generate $\y$ for $\x$.
From these propensity-augmented logs 
\begin{equation*}
\sample\! =\! \{ (\x_1,\!\y_1,\!\loss_1,\!\ppty_1), \dots, (\x_\n,\!\y_\n,\!\loss_\n,\!\ppty_\n) \},
\end{equation*}
where $\ppty_i\! \equiv\! \piexpp{\y_i \mid \x_i}$, we can derive an unbiased estimate
of $\risk{\h}$ via Monte Carlo approximation,
\begin{equation}
\label{eq:montecarlo}
 \riskempunclipped{\h} = \frac{1}{\n} \sum_{i=1}^\n \loss_i \frac{\h(\y_i \mid \x_i)}{\ppty_i}.
\end{equation}
At first thought, one may think that directly estimating $\riskempunclipped{\h}$
over $\h \in \hypspace$ and picking the empirical minimizer is a valid learning strategy. Unfortunately, there are several potential pitfalls.

First, this strategy is not invariant to additive transformations of the loss and will give degenerate results if the loss is not appropriately scaled.
In Section~\ref{sec:loss_scale}, we develop intuition for why this is so, and derive the optimal scaling of $\loss$.
For now, assume that $\forall \x, \forall \y, \loss(\x, \y) \in \left[ -1, 0 \right]$.

Second, this estimator has unbounded variance, since $\ppty_i \simeq 0$ in $\sample$
can cause $\riskempunclipped{\h}$ to be
arbitrarily far away from the true risk $\risk{\h}$.
This problem can be fixed by ``clipping'' the importance sampling weights \cite{Ionides2008}
\begin{align}
 R^M(\h) &= \mathbb{E}_{\x} \mathbb{E}_{\y \sim \piexpp{\x}}\!\left[ \loss(\x, \y) 
 \min\!\left\{\!M,\!\frac{\h(\y\!\mid\!\x)}{\piexpp{\y\!\mid\!\x}}\!\right\} \right]\nonumber,\\
 \riskemp{\h} &= \frac{1}{\n} \sum_{i=1}^\n \loss_i \min \left\{ M, \frac{\h(\y_i \mid \x_i)}{\ppty_i} \right\}\nonumber.
\end{align}
$M > 0$ is a hyper-parameter chosen to trade-off bias and variance in the estimate, where
smaller values of $M$ induce larger bias in the estimate.
Optimizing $\riskemp{\h}$ through exhaustive enumeration over $\hypspace$ yields
the Inverse Propensity Scoring (IPS) training objective \cite{Bottou2013}
\begin{equation}
\label{eq:ipsobj}
 \hat{\h}^{IPS} = \argmin_{\h \in \hypspace} \left\{ \riskemp{\h} \right\}.
\end{equation}
Third, importance sampling typically estimates $\riskemp{\h}$ of different
hypotheses $\h \in \hypspace$ with vastly different variances.
Consider two hypotheses $\h_1$ and $\h_2$,
where $\h_1$ is similar to $\piexp$, but where $\h_2$
samples predictions that were not well explored by $\piexp$.
Importance sampling gives us low-variance estimates for $\riskemp{\h_1}$,
but highly variable estimates for $\riskemp{\h_2}$.
Intuitively, if we can develop variance-sensitive confidence bounds
over the hypothesis space, optimizing a conservative confidence bound
should find a $\h$ whose $\risk{\h}$ will not be much worse, with high probability.

\paragraph{Generalization error bound.} A standard analysis would give
a bound that is agnostic to variance introduced by importance sampling.
Following our intuition above, we derive a higher order bound that includes the variance term
using empirical Bernstein bounds \cite{Maurer2009}.
To develop such a generalization error bound,
we first need a concept of capacity for stochastic hypothesis classes.
For any stochastic class $\hypspace$, define an auxiliary function class $\mathcal{F}_\hypspace\!=\!\{ f_\h\! :\! \xspace\!\times\!\Y\!\mapsto \left[ 0, 1 \right] \}$.
Each $\h \in \hypspace$ corresponds to a function $f_\h \in \mathcal{F}_\hypspace$,
\begin{equation}
\label{eq:translate1}
  f_\h(\x, \y) = 1 + \frac{\loss(\x, \y)}{M} \min \left\{ M, \frac{\h(\y \mid \x)}{\piexpp{\y \mid \x}} \right\}.
\end{equation}
$f_\h$ is a deterministic, bounded function, and satisfies
\begin{equation}
\label{eq:translate2}
\mathbb{E}_{\x} \mathbb{E}_{\y \sim \piexpp{\x}} \left[ f_\h(\x, \y) \right] = 1 + R^M(\h)/M .
\end{equation}
Hence, we can use classic notions of capacity for $\mathcal{F}_\hypspace$ to reason about the convergence of $\riskemp{\h} \rightarrow R^M(\h)$.

Recall the covering number $\mathcal{N}_\infty( \epsilon, \mathcal{F}, \n)$ for a function class $\mathcal{F}$ 
(refer \cite{Anthony2009}, \cite{Maurer2009} and the references therein).
Define an $\epsilon-$cover $\mathcal{N}(\epsilon, A, \| \cdot \|_\infty)$ for a set $A \subseteq \mathbb{R}^n$
to be the size of the smallest cardinality subset $A_0 \subseteq A$
such that $A$ is contained in the union of balls of radius $\epsilon$
centered at points in $A_0$, in the metric induced by $\| \cdot \| _ \infty$.
The covering number is,
 \begin{equation*}
  \mathcal{N}_\infty( \epsilon, \mathcal{F}, \n)\! = \! \sup_{(\x_i,\y_i) \in (\xspace \times \Y)^\n}
  \mathcal{N}(\epsilon, \mathcal{F}(\{(\x_i,\y_i)\}), \| \cdot \| _ \infty),
 \end{equation*}
where $\mathcal{F}(\{(\x_i,\y_i)\})$ is the function class conditioned on sample $\{(\x_i,\y_i)\}$,
\begin{equation*}
\mathcal{F}(\{(\x_i,\y_i)\}) = \{ (f(\x_1, \y_1), \dots, f(\x_\n, \y_\n)): f \in \mathcal{F} \}.
\end{equation*}
Our measure for the capacity of our stochastic class $\hypspace$
to ``fit'' a sample of size $\n$
shall be $\mathcal{N}_\infty(\frac{1}{\n}, \mathcal{F}_\hypspace, 2\n)$.

\begin{theorem}
\label{thm:genbound}
 For a compact notation, define
\begin{align*}
\vspace{-0.2in}
  &{u_{\h}}^i \equiv \loss_i \min\{ M, \h(\y_i \mid \x_i) / \ppty_i \}, \quad \overline{u_{\h}} \equiv \sum_{i=1}^\n {u_{\h}}^i / \n,\\
  &{\bm{Var}_{\h}}(u) \equiv \sum_{i=1}^\n ({u_{\h}}^i - \overline{u_{\h}})^2 / (n - 1),\\  
  &\mathcal{Q}_\hypspace(\n, \gamma) \equiv \log(10 \cdot \mathcal{N}_\infty( \frac{1}{n}, \mathcal{F}_\hypspace, 2\n) /\gamma), \quad 0 < \gamma <  1.
\end{align*}
 With probability at least $1 - \gamma$ in the random vector $(\x_1, \y_1) \cdots (\x_\n, \y_\n)$, with 
 $\x_i \overset{i.i.d.}{\sim} \Pr(\xspace)$ and $\y_i \sim \piexp(\x_i)$, and
 observed losses $\loss_1,\dots,\loss_\n$,
 for $\n \ge 16$ and a stochastic hypothesis space $\hypspace$ with capacity
 $\mathcal{N}_\infty(\frac{1}{\n}, \mathcal{F}_\hypspace, 2\n)$,
 \begin{align*}
  \forall \h \in \hypspace:
  \risk{\h} &\le \riskemp{\h} + \sqrt{18 {\bm{Var}_{\h}}(u) \mathcal{Q}_\hypspace(\n, \gamma) / \n}\\
  & + M \cdot 15 \mathcal{Q}(\n, \gamma) / (\n-1).
  \label{eq:genbound}
 \end{align*}
\end{theorem}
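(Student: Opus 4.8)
The plan is to bound the true risk $\risk{\h}$ by chaining two inequalities: first relate $\risk{\h}$ to the clipped risk $R^M(\h)$, and then relate $R^M(\h)$ to its empirical estimate $\riskemp{\h}$ through a uniform empirical Bernstein bound applied to the auxiliary class $\mathcal{F}_\hypspace$. The construction of $f_\h$ in \eqref{eq:translate1}, together with the identity \eqref{eq:translate2}, is precisely the bridge that lets a concentration result for $[0,1]$-valued deterministic functions be transported into a statement about the stochastic hypotheses and their risks.

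First I would argue that clipping is \emph{pessimistic}, i.e. $\risk{\h} \le R^M(\h)$. Since the importance weight $\h(\y\!\mid\!\x)/\piexpp{\y\!\mid\!\x}$ is nonnegative, we have $\min\{M, \h(\y\!\mid\!\x)/\piexpp{\y\!\mid\!\x}\} \le \h(\y\!\mid\!\x)/\piexpp{\y\!\mid\!\x}$ pointwise. Because $\loss(\x,\y) \in [-1,0]$, multiplying by the nonpositive $\loss(\x,\y)$ reverses the inequality, so $\loss(\x,\y)\min\{M, \h(\y\!\mid\!\x)/\piexpp{\y\!\mid\!\x}\} \ge \loss(\x,\y)\,\h(\y\!\mid\!\x)/\piexpp{\y\!\mid\!\x}$ at every $(\x,\y)$. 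Taking $\mathbb{E}_\x\mathbb{E}_{\y\sim\piexpp{\x}}$ of both sides gives $R^M(\h) \ge \risk{\h}$, which is exactly the direction needed for an upper bound on the risk.

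Next I would invoke the covering-number (uniform) form of the empirical Bernstein inequality of \cite{Maurer2009} applied to $\mathcal{F}_\hypspace$, whose members take values in $[0,1]$ by construction. For $\n \ge 16$ this yields, with probability at least $1-\gamma$ and simultaneously for all $f_\h \in \mathcal{F}_\hypspace$, the inequality $\mathbb{E}[f_\h] \le \tfrac{1}{\n}\sum_{i=1}^\n f_\h(\x_i,\y_i) + \sqrt{18\,\hat V_\n(f_\h)\,\mathcal{Q}_\hypspace(\n,\gamma)/\n} + 15\,\mathcal{Q}_\hypspace(\n,\gamma)/(\n-1)$, where $\hat V_\n(f_\h)$ is the unbiased sample variance of $f_\h$ over the data. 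The constants $18$ and $15$, and the factor $10$ inside $\mathcal{Q}_\hypspace$, are exactly those produced by that theorem; the condition $\n \ge 16$ is its range of applicability, and the capacity $\mathcal{N}_\infty(\tfrac{1}{\n}, \mathcal{F}_\hypspace, 2\n)$ is the matching covering number.

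Finally I would translate this $[0,1]$-valued statement back into risk units via the affine relation $u_\h^i = M\,(f_\h(\x_i,\y_i) - 1)$ implicit in \eqref{eq:translate1}. This gives $\tfrac{1}{\n}\sum_{i=1}^\n f_\h(\x_i,\y_i) = 1 + \riskemp{\h}/M$, while $\mathbb{E}[f_\h] = 1 + R^M(\h)/M$ by \eqref{eq:translate2}, and $\hat V_\n(f_\h) = \bm{Var}_\h(u)/M^2$ since sample variance is invariant to the additive shift and scales by $1/M^2$. Substituting into the Bernstein bound cancels the constant offset $1$, and multiplying through by $M$ produces $R^M(\h) \le \riskemp{\h} + \sqrt{18\,\bm{Var}_\h(u)\,\mathcal{Q}_\hypspace(\n,\gamma)/\n} + 15\,M\,\mathcal{Q}_\hypspace(\n,\gamma)/(\n-1)$. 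Chaining with $\risk{\h} \le R^M(\h)$ from the second step yields the claim. The genuinely nontrivial content is entirely contained in the Maurer--Pontil covering-number bound, so the main obstacle is conceptual rather than computational: one must package the dependence of the \emph{stochastic} hypotheses into a \emph{deterministic} $[0,1]$-valued function class (already arranged in the excerpt via $f_\h$) so that an off-the-shelf uniform concentration inequality applies over a possibly infinite $\hypspace$, and then match the variance and covering conventions exactly; the clipping direction and the affine rescaling are routine bookkeeping.
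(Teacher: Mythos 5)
Your proposal is correct and follows essentially the same route as the paper's own proof: invoke the uniform empirical Bernstein bound (Theorem 6 of Maurer and Pontil) on the auxiliary class $\mathcal{F}_\hypspace$, translate back to risk units via the affine correspondence in Equations~\eqref{eq:translate1} and \eqref{eq:translate2} (including the variance scaling $M^2\,\bm{Var}_{\h}(u) = \bm{Var}_{f_\h}(u)$), and finish with the clipping pessimism $\risk{\h} \le R^M(\h)$, which holds because $\loss(\cdot,\cdot) \le 0$. Your write-up simply makes explicit the bookkeeping that the paper compresses into three lines.
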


\begin{proof}
Follow the proof of Theorem 6 of \cite{Maurer2009} with the function class as $\mathcal{F}_\hypspace$.
Use Equations~\eqref{eq:translate1}, \eqref{eq:translate2} to translate from $f_\h(\x, \y)$ to $R^M(\h)$.
$\riskemp{\h} = M \cdot \hat{f}_\h -1$, $R^M(\h) = M \cdot f_\h -1$, and 
$M^2 {\bm{Var}_{\h}}(u) = {\bm{Var}_{f_\h}}(u)$.
Finally, since $\loss(\cdot,\!\cdot)\!\le\!0$, hence $\risk{\h}\! \le\! R^M(\h)$.
\end{proof}

\paragraph{CRM Principle.} This generalization error bound is constructive, and
it motivates a general principle for designing machine learning methods for batch learning from bandit feedback.
In particular, a learning algorithm following this principle
should jointly optimize the estimate $\riskemp{\h}$ as well as its empirical standard deviation,
where the latter serves as a \emph{data-dependent regularizer}.
\begin{equation}
\hcrm = \argmin_{\h \in \hypspace} \left\{ \riskemp{\h} + \lambda \sqrt{\frac{{\bm{Var}_{\h}}(u)}{\n}} \right\} \label{eq:crmobj}.
\end{equation}
$M > 0$ and  $\lambda \ge 0$ are regularization hyper-parameters.
When $\lambda = 0$, we recover the Inverse Propensity Scoring objective of Equation~\eqref{eq:ipsobj}.
In analogy to Structural Risk Minimization \cite{Vapnik1998},
we call this principle {\em Counterfactual Risk Minimization},
since both pick the hypothesis with the tightest upper bound on the true risk $\risk{\h}$.

\subsection{Optimal Loss Scaling}
\label{sec:loss_scale}
When performing supervised learning with true labels $\y^*$
and a loss function $\Loss(\ystar, \cdot)$,
empirical risk minimization using the standard estimator
is invariant to additive translation and multiplicative scaling of $\Loss$.
The risk estimators $\riskempunclipped{\h}$ and $\riskemp{\h}$ in bandit learning,
however, crucially require $\loss(\cdot, \cdot) \in \left[ -1, 0 \right]$.

Consider, for example, the case of $\loss(\cdot, \cdot) \ge 0$.
The training objectives in Equation~\eqref{eq:ipsobj} (IPS)
and Equation~\eqref{eq:crmobj} (CRM)
become degenerate!
A hypothesis $\h \in \hypspace$ that completely avoids the sample $\sample$
(i.e. $\forall i = 1, \dots, \n, \h(\y_i \mid \x_i) = 0$)
 trivially achieves the best possible $\riskemp{\h}$ ($= 0$) with $0$ empirical variance.
This degeneracy arises because when $\loss(\cdot, \cdot) \ge 0$,
the optimization objectives are a \emph{lower} bound on $\risk{\h}$, whereas what we need is an \emph{upper} bound.

For any bounded loss $\delta(\cdot, \cdot) \in \left[ \bigtriangledown, \bigtriangleup \right]$,
we have, $\forall \x$
\begin{equation*}
\mathbb{E}_{\y \sim \h(\!\x\!)} \!\left[ \loss(\x, \y) \right]\! \le \! \bigtriangleup \! + \mathbb{E}_{\y \sim \piexpp{\!\x\!}} \! \left[ (\loss(\x, \y)\! -\! \bigtriangleup) \frac{\h(\y \mid \x)}{\piexpp{\y \mid \x}} \right].
\end{equation*}
We assert that this is the tightest possible upper bound possible without additional assumptions.
Since the optimization objectives in Equations~\eqref{eq:ipsobj},\eqref{eq:crmobj} are unaffected by a constant scale factor
(e.g., $\bigtriangleup - \bigtriangledown$), we should transform $\loss \mapsto \loss'$ to derive a conservative training objective w.r.t. $\loss'$,
\begin{equation*}
 \loss' \equiv \{\loss - \bigtriangleup\}/\{\bigtriangleup - \bigtriangledown\}.
\end{equation*}

\subsection{Selecting hyper-parameters}
\label{sec:hyperparam_scale}
We propose selecting the hyper-parameters $M > 0$ and $\lambda \ge 0$ via validation.
However, we must be careful not to set $M$ too small or $\lambda$ too big.
The estimated risk $\riskemp{\h} \in \left[ -M, 0 \right]$,
while the variance penalty $\sqrt{\frac{{\bm{Var}_{\h}}(u)}{\n}} \in \left[ 0, \frac{M}{2\sqrt{\n}} \right]$.
If $M$ is too small, all hypotheses will have the same biased estimate of risk $M \riskemp{\piexp}$, since all the importance sampling weights will be clipped.
Similarly, if $\lambda \gg 0$, a hypothesis $\h \in \hypspace$ that completely avoids $\sample$
achieves the best possible training objective of $0$.
As a rule of thumb, we can calibrate $M$ and $\lambda$ so that the estimator is unbiased and
objective is negative for some $\h \in \hypspace$.
When $\piexp \in \hypspace$, $M \simeq \max \{ \ppty_i \} / \min \{ \ppty_i \}$ and $\left\{ \riskemp{\piexp} + \lambda \sqrt{\frac{{\bm{Var}_{\piexp}}(u)}{\n}} \right\} < 0$ are natural choices.

\subsection{When is counterfactual learning possible?}
\label{sec:cflearning_feasible}
The bounds in Theorem~\ref{thm:genbound} are with respect to
the randomness in $\piexp$.
Known impossibility results for counterfactual evaluation using $\piexp$ \cite{Langford2008}
also apply to counterfactual learning.
In particular, if $\piexp$ was deterministic, or even stochastic but without
full support over $\Y$, it is easy to engineer examples involving the unexplored $y \in \Y$
that guarantee sub-optimal learning even as $\left| \sample \right| \rightarrow \infty$.
Also, a stochastic $\piexp$ with heavier tails need not always allow more effective learning.
From importance sampling theory \cite{Owen2013},
what really matters is how well $\piexp$ explores the regions of $\Y$ with favorable losses.

\section{Learning Algorithm: \algnames}
\label{sec:blalgorithm}
We now use the CRM principle to derive an efficient algorithm for structured output prediction using linear rules.
Classic models in supervised learning (e.g., structured support vector machines \cite{Tsochantaridis2004} and conditional random fields \cite{Lafferty2001}) predict using
\begin{equation}
\h^{sup}_{\w}(\x) = \argmax_{\y \in \Y} \left\{ \w \cdot \phi(\x, \y) \right\},
\label{eq:linearhyp}
\end{equation}
where $\w$ is a $\N-$dimensional weight vector, and $\phi(\x, \y)$ is a $\N-$dimensional joint feature map.
For example, in multi-label document classification, for a news article $x$
and a possible assignment of labels $\y$ represented as a bitvector,
$\phi(\x, \y)$ could simply be a concatenation of the bag-of-words features of the document $(\overline{\x})$,
one copy for each of the assigned labels in $\y$, $\overline{\x} \otimes \y$.
Several efficient inference algorithms have been developed to solve Equation~\eqref{eq:linearhyp}.

Consider the following stochastic family $\hypspace_{lin}$, parametrized by $\w$.
A hypothesis $\hp{\w}(\x) \in \hypspace_{lin}$ samples $\y$ from the distribution
\begin{equation*}
 \hp{\w}(\y \mid \x) = \exp(\w \cdot \phi(\x, \y)) / \mathbb{Z}(\x).
\end{equation*}
$\mathbb{Z}(x) = \sum_{\y' \in \Y} \exp(\w \cdot \phi(\x, \y')) $ is the partition function.
This can be thought of as the ``soft-max'' variant of the ``hard-max'' rules from Equation~\eqref{eq:linearhyp}.
Additionally, for a \emph{temperature} multiplier $\alpha > 1, \w \mapsto \alpha\w$ induces a more ``peaked'' distribution $\h_{\alpha \w}$
that preserves the modes of $\h_{\w}$, and intuitively is a ``more deterministic'' variant of $\h_{\w}$.

$\hp{\w}$ lies in the exponential family of distributions, and has a simple gradient,
\begin{equation*}
 \nabla \hp{\!\w\!}(\y \! \mid \! \x)\!=\!\hp{\!\w\!}(\y \! \mid \! \x)\! \left\{\! \phi(\x,\!\y)\!-\!\mathbb{E}_{\y'\!\sim\hp{\!\w\!}(\!\x\!)}\!\left[\phi(\x,\!\y')\right]\right\}.
\end{equation*}

Consider a bandit-feedback structured-output dataset $\sample = \{ (\x_1,\y_1,\loss_1,\ppty_1), \dots, (\x_\n,\y_\n,\loss_\n,\ppty_\n) \}$.
In multi-label document classification,
this data could be collected from an interactive labeling system,
where each $\y$ indicates the labels predicted by the system for a document $\x$.
The feedback $\loss(\x, \y)$ is how many labels (but not which ones) were correct.
To perform learning, first we scale the losses as outlined in Section~\ref{sec:loss_scale}.
Next, instantiating the CRM principle (Equation~\eqref{eq:crmobj}) for $\hypspace_{lin}$,
(using notation analogous to that in Theorem~\ref{thm:genbound}, adapted for $\hypspace_{lin}$), yields the \algnames\ training objective.

\paragraph{\algnames\ Training Objective:}
\begin{align}
\w^* &= \argmin_{\w \in \mathbb{R}^\N} \overline{u_{\w}} + \lambda \sqrt{\frac{\bm{Var}_{\w}(u)}{\n}} \label{eq:crmlinobj},
\end{align}
\begin{align*}
& {u_{\w}}^i \equiv \loss_i \min\{ M, \frac{\exp(\w \cdot \phi(\x_i, \y_i))}{\ppty_i \cdot \mathbb{Z}(\x_i)} \}, \,\, \overline{u_{\w}} \equiv \sum_{i=1}^\n {u_{\w}}^i / \n,\\
& \bm{Var}_{\w}(u) \equiv \sum_{i=1}^\n ({u_{\w}}^i - \overline{u_{\w}})^2 / (n - 1).
\end{align*}
While the objective in Equation~\eqref{eq:crmlinobj} is not convex in $\w$ 
(even for $\lambda = 0$),
prior work \cite{Yu2010}, \cite{Lewis2013} has established theoretically sound
modifications to L-BFGS for non-zmooth non-convex optimization.
We find that batch gradient descent (e.g., L-BFGS out of the box)
and the stochastic gradient approach introduced below find local optima that have good generalization error.

Software implementing \algnames\ is available at \url{http://www.cs.cornell.edu/~adith/poem/}
for download, as is all the code and data needed to run each of the experiments reported in Section~\ref{sec:experiments}.

\subsection{Iterated Variance Majorization}
\label{sec:sgd}
The \algnames\ training objective in Equation~\eqref{eq:crmlinobj}, specifically the variance term $\sqrt{{\bm{Var}_{\w}(u)}}$,
resists stochastic gradient optimization in the presented form. To remove this obstacle, we now develop
a Majorization-Minimization scheme, similar in spirit to recent approaches to multi-class SVMs \cite{vandenBurg2014} that can be shown to converge to a local optimum
of the \algnames\ training objective. 
In particular, we will show how to decompose $\sqrt{{\bm{Var}_{\w}(u)}}$ as
a sum of differentiable functions (e.g., $\sum_i {u_{\w}}^i$ or $\sum_i \{{u_{\w}}^i\}^2$)
so that we can optimize the overall training objective at scale using stochastic gradient descent.
\begin{proposition}
For any $\w_0$,
\begin{align*}
 \sqrt{{\bm{Var}_{\w}(u)}} &\le A_{\w_0} \sum_{i=1}^\n {u_{\w}}^i + B_{\w_0} \sum_{i=1}^\n \{{u_{\w}}^i\}^2 + C_{\w_0}\\
 &= Q(\w; \w_0).\\
 A_{\w_0} &\equiv -\overline{u_{\w_0}}/ \{ (n-1)\sqrt{\bm{Var}_{\w_0}(u)} \}, \\
 B_{\w_0} &\equiv 1/\{2(n-1)\sqrt{\bm{Var}_{\w_0}(u)}\}, \\
 C_{\w_0} &\equiv \frac{\n \{\overline{u_{\w_0}}\}^2}{2(n-1)\sqrt{\bm{Var}_{\w_0}(u)}} + \frac{\sqrt{\bm{Var}_{\w_0}(u)}}{2}.\\
\end{align*}
\vspace{-0.5in}
\end{proposition}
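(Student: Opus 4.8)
The plan is to build the majorizer from two successive tangent-line upper bounds, each exploiting concavity, so that the surrogate $Q(\w;\w_0)$ becomes a separable sum over the data points and agrees with $\sqrt{\bm{Var}_{\w}(u)}$ at $\w=\w_0$. To set up, write $S_1 \equiv \sum_{i=1}^\n {u_{\w}}^i = \n\,\overline{u_{\w}}$ and $S_2 \equiv \sum_{i=1}^\n \{{u_{\w}}^i\}^2$, and recall the algebraic identity $(\n-1)\,\bm{Var}_{\w}(u) = S_2 - S_1^2/\n$. The whole argument reduces to linearizing the two nonlinearities appearing here, the outer $\sqrt{\,\cdot\,}$ and the inner $-S_1^2$.

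First I would linearize the outer square root. Since $\sqrt{t}$ is concave, its graph lies below each of its tangents; linearizing at $t_0 = \bm{Var}_{\w_0}(u)$ gives $\sqrt{\bm{Var}_{\w}(u)} \le \tfrac{1}{2}\sqrt{\bm{Var}_{\w_0}(u)} + \bm{Var}_{\w}(u)/(2\sqrt{\bm{Var}_{\w_0}(u)})$. Substituting the variance identity turns the right-hand side into $\tfrac{1}{2}\sqrt{\bm{Var}_{\w_0}(u)} + (S_2 - S_1^2/\n)/(2(\n-1)\sqrt{\bm{Var}_{\w_0}(u)})$. Reading off coefficients, the multiplier of $S_2 = \sum_i \{{u_{\w}}^i\}^2$ is already exactly $B_{\w_0}$, so the quadratic part of the claimed bound is in place.

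The main obstacle is the leftover $-S_1^2/\n$ term: it is quadratic in the sum $S_1=\sum_i {u_{\w}}^i$, hence couples all data points and is not of the separable form required for stochastic gradient steps. I would dissolve this coupling with a second linearization, exploiting that $-x^2$ is concave: its tangent at $x_0 = \n\,\overline{u_{\w_0}} = \sum_i {u_{\w_0}}^i$ yields $-S_1^2 \le x_0^2 - 2x_0 S_1 = \n^2\{\overline{u_{\w_0}}\}^2 - 2\n\,\overline{u_{\w_0}}\,S_1$, so that $-S_1^2/\n \le \n\{\overline{u_{\w_0}}\}^2 - 2\,\overline{u_{\w_0}}\,S_1$. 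Plugging this into the previous display and collecting terms gives the coefficient $-\overline{u_{\w_0}}/((\n-1)\sqrt{\bm{Var}_{\w_0}(u)}) = A_{\w_0}$ on $S_1$, keeps $B_{\w_0}$ on $S_2$, and leaves the constant $\tfrac{1}{2}\sqrt{\bm{Var}_{\w_0}(u)} + \n\{\overline{u_{\w_0}}\}^2/(2(\n-1)\sqrt{\bm{Var}_{\w_0}(u)}) = C_{\w_0}$, which is precisely $Q(\w;\w_0)$. Finally I would verify tightness at $\w=\w_0$: both tangent bounds hold with equality there, since the linearization points coincide with the evaluation point, so $Q(\w_0;\w_0) = \sqrt{\bm{Var}_{\w_0}(u)}$. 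This confirms $Q$ is a genuine majorizer touching the objective at $\w_0$, so minimizing $Q$ (a differentiable, separable surrogate) yields MM iterations that monotonically decrease the \algnames\ training objective.
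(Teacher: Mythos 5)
Your proof is correct and follows exactly the paper's argument: the paper's proof is the same two-step tangent-bound construction, first a concavity (first-order Taylor) upper bound on $\sqrt{\cdot}$ at $\bm{Var}_{\w_0}(u)$, then a concavity upper bound on the resulting $-\{\overline{u_{\w}}\}^2$ term, and your algebra recovering $A_{\w_0}$, $B_{\w_0}$, $C_{\w_0}$ checks out. You have merely written out in full the details the paper compresses into two sentences, including the useful verification of tightness at $\w=\w_0$.
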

\begin{proof}
Consider a first order Taylor approximation of $\sqrt{{\bm{Var}_{\w}(u)}}$ around $\w_0$, $\sqrt{\cdot}$ is concave.
Again Taylor approximate $-\{\overline{u_{\w}}\}^2$, noting that $-\{ \cdot \}^2$ is concave.
\end{proof}
Iteratively minimizing $\w^{t+1} = \argmin_\w Q(\w; \w^{t})$
ensures that the sequence of iterates
$w^{1}, \dots, w^{t+1}$ are successive minimizers of $\sqrt{{\bm{Var}_{\w}(u)}}$.
Hence, during an epoch $t$, \algnames\ proceeds by sampling uniformly $i \sim \sample$,
computing ${u_{\w}}^i, \nabla {u_{\w}}^i$ and, for learning rate $\eta$,  updating
\begin{equation*}
\w \leftarrow \w - \eta \{ \nabla {u_{\w}}^i + \lambda \sqrt{\n} ( A_{w_t} \nabla {u_{\w}}^i + 2 B_{w_t} {u_{\w}}^i \nabla {u_{\w}}^i )\}.
\end{equation*}
After each epoch, $\w^{t+1} \leftarrow w$, and iterated minimization proceeds until convergence.

\section{Experiments}
\label{sec:experiments}
We now empirically evaluate the prediction performance and computational efficiency of \algnames.
Consider multi-label classification with input $\x \in \mathbb{R}^p$
and prediction $\y \in \{ 0, 1\}^q$.
Popular supervised algorithms that solve this problem
include Structured SVMs \cite{Tsochantaridis2004} and Conditional Random Fields \cite{Lafferty2001}.
In the simplest case, CRF essentially
performs logistic regression for each of the $q$ labels independently.
As outlined in Section~\ref{sec:blalgorithm},
we use a joint feature map: $\phi(\x, \y) = \x \otimes \y$.
We conducted experiments on different multi-label datasets
collected from the \href{http://www.csie.ntu.edu.tw/~cjlin/libsvmtools/datasets/multilabel.html}{LibSVM repository},
with different ranges for $p$ (features), $q$ (labels) and $\n$ (samples) represented as
summarized in Table~\ref{tab:datasets}.
\begin{table}[ht]
\vspace{-0.15in}
\caption{Corpus statistics for different multi-label datasets from the LibSVM repository.
LYRL was post-processed so that only top level categories were treated as labels.}
\label{tab:datasets}
\begin{center}
\begin{small}
\begin{tabular}{|l|c|c|c|c|}
\hline
Name & $p$(\# features)& $q$(\# labels) & $\n_{train}$ & $n_{test}$ \\
\hline
Scene	& 294 & 6 & 1211 & 1196 \\
Yeast	& 103 & 14 & 1500 & 917 \\
TMC	& 30438 & 22 & 21519 & 7077 \\
LYRL 	& 47236 & 4 & 23149 & 781265 \\
\hline
\end{tabular}
\end{small}
\end{center}
\vspace{-0.2in}
\end{table}
\paragraph{Experiment methodology.}
We employ the Supervised $\mapsto$ Bandit conversion \cite{Agarwal2014} method.
Here, we take a supervised dataset $\sample^* = \{ (\x_1, \ystar_1) \dots (\x_\n, \ystar_\n) \}$
and simulate a bandit feedback dataset from a logging policy $\piexp$ by sampling
$\y_i \sim \piexpp{\x_i}$ and collecting feedback $\Loss(\ystar_i, \y_i)$.
In principle, we could use any arbitrary stochastic policy as $\piexp$.
We choose a CRF trained on $5\%$ of $\sample^*$ as $\piexp$ using
default hyper-parameters,
since they provide probability distributions amenable to sampling.
In all the multi-label experiments, $\Loss(\ystar, \y)$ is the Hamming loss
between the supervised label $\ystar$ vs. the sampled label $\y$ for input $\x$.
Hamming loss is just the number of incorrectly assigned labels (both false positives and false negatives).
To create bandit feedback $\sample = \{ (\x_i,\y_i,\loss_i \equiv \Loss(\ystar_i, \y_i),\ppty_i \equiv \piexpp{\y_i \mid \x_i}) \}$,
we take four passes through $\sample^*$ and sample labels from $\piexp$.
Note that each supervised label is worth $\simeq \left| \Y \right| = 2^q$ bandit feedback labels.
We can explore different learning strategies (e.g., IPS, CRM, etc.)
on $\sample$ and obtain learnt weight vectors $\w_{ips}, \w_{crm}$, etc.
On the supervised test set, we then report the expected loss per instance
$\risk{w} = \frac{1}{\n_{test}} \sum_i \mathbb{E}_{y \sim \hp{\w}(\x_i)} \Loss(\ystar_i, \y)$
and compare the generalization performance of these learning strategies.
\paragraph{Baselines and learning methods.}
The expected Hamming loss of $\piexp$ is the baseline to beat.
Lower loss is better. 
The na\"{\i}ve, variance-agnostic approach to counterfactual learning \cite{Bottou2013} can be generalized to handle parametric multilabel classification
(Equation~\eqref{eq:crmlinobj} with $\lambda=0$).
We optimize it either using L-BFGS (IPS($\mathcal{B}$)) or stochastic optimization (IPS($\mathcal{S}$)).
\algnames($\mathcal{S}$) uses our Iterative-Majorization approach to variance regularization as outlined in Section~\ref{sec:sgd},
while \algnames($\mathcal{B}$) is a L-BFGS variant.
Finally, we report results from a supervised CRF as a skyline,
despite its unfair advantage of having access to the full-information examples.

We keep aside $25\%$ of $\sample$ as a validation set --
we use the unbiased counterfactual estimator from Equation~\eqref{eq:montecarlo}
for selecting hyper-parameters.
$\lambda = c \lambda^*$,
where $\lambda^*$ is the calibration factor from Section~\ref{sec:hyperparam_scale} and
$c \in \left[ 10^{-6}, \dots, 1 \right]$ in multiples of $10$.
The clipping constant $M$ is similarly set to the ratio of the $90\%ile$ to the $10\%ile$ propensity score observed in the
training set of $\sample$.
For all methods, when optimizing any objective over $\w$,
we always begin the optimization from $\w = 0 \ (\Rightarrow \h_\w = \textrm{ uniform}(\Y))$.
We use mini-batch AdaGrad \cite{Duchi2011} with batch size $= 100$
to adapt our learning rates for the stochastic approaches
and use progressive validation \cite{Blum1999} and gradient norms to detect convergence.
Finally, the entire experiment set-up is run 10 times (i.e. $\piexp$ trained
on randomly chosen $5\%$ subsets, $\sample$ re-created, and test set performance of different approaches collected)
and we report the averaged test set expected error across runs.
\subsection{Does variance regularization improve generalization?}
Results are reported in Table~\ref{tab:results}.
We statistically test the performance of \algnames\ against IPS
(batch variants are paired together, and the stochastic variants are paired together)
using a one-tailed paired difference t-test
at significance level of 0.05 across 10 runs of the experiment,
and find \algnames\ to be significantly better than IPS on each dataset and each optimization variant.
Furthermore, on all datasets \algnames\ learns a hypothesis that substantially improves over the performance of $\piexp$.
This suggests that the CRM principle is practically useful for designing learning algorithms,
and that the variance regularizer is indeed beneficial.
\begin{table}[ht]
\vspace{-0.15in}
\caption{Test set Hamming loss for different approaches to multi-label classification
on different datasets, averaged over 10 runs. 
\algnames\ is significantly better than IPS on each dataset and each optimization variant
(one-tailed paired difference t-test at significance level of 0.05).}
\label{tab:results}
\begin{center}
\begin{small}
\begin{tabular}{|l||c|c|c|c|c|}
\hline
 & Scene & Yeast & TMC & LYRL \\
\hline
$\piexp$ 		& 1.543 & 5.547 & 3.445 & 1.463 \\
\hline
IPS($\mathcal{B}$)	& 1.193 & 4.635 & 2.808 & 0.921\\
\algnames($\mathcal{B}$)& 1.168 & 4.480 & 2.197 & 0.918\\
  \hdashline
IPS($\mathcal{S}$)	& 1.519 & 4.614 & 3.023 & 1.118\\
\algnames($\mathcal{S}$)& 1.143 & 4.517 & 2.522 & 0.996\\
\hline
CRF			& 0.659 & 2.822 & 1.189 & 0.222 \\
\hline
\end{tabular}
\end{small}
\end{center}
\vspace{-0.15in}
\end{table}
\subsection{How computationally efficient is  \algnames?}
Table~\ref{tab:results_time} shows the time taken (in CPU seconds) to run each method on each dataset,
averaged over different validation runs when performing hyper-parameter grid search.
Some of the timing results are skewed by outliers, e.g., when under very weak regularization,
CRFs tend to take a lot longer to converge.
In aggregate, it is clear that the stochastic variants are able to recover good parameter settings
in a fraction of the time of batch L-BFGS optimization,
and this is even more pronounced when the number of labels grows 
(the run-time is dominated by computation of ${\mathbb{Z}(\x_i)}$).
\begin{table}[ht]
\vspace{-0.15in}
\caption{Average time in seconds for each validation run for different approaches to multi-label classification. CRF is the scikit-learn implementation \cite{Pedregosa2011}.
On all datasets, stochastic approaches are substantially faster than batch gradients.}
\label{tab:results_time}
\begin{center}
\begin{small}
\begin{tabular}{|l||c|c|c|c|c|}
\hline
 & Scene & Yeast & TMC & LYRL \\
\hline
IPS($\mathcal{B}$)   		& 2.58 & 47.61 & 136.34 & 21.01 \\
IPS($\mathcal{S}$)   		& 1.65 & 2.86 & 49.12 & 13.66 \\
\hdashline
\algnames($\mathcal{B}$) 	& 75.20 & 94.16 & 949.95 & 561.12 \\
\algnames($\mathcal{S}$)	& 4.71 & 5.02 & 276.13 & 120.09 \\
\hline
CRF				& 4.86 & 3.28 & 99.18 & 62.93 \\
\hline
\end{tabular}
\end{small}
\end{center}
\vspace{-0.1in}
\end{table}
\subsection{Can MAP predictions derived from stochastic policies perform well?}
For the policies learnt by \algnames\ as shown in Table~\ref{tab:results}, Table~\ref{tab:detmap} reports
the averaged performance of the deterministic predictor derived from them.
For a learnt weight vector $\w$, this simply amounts to applying Equation~(\ref{eq:linearhyp}).
In practice, this method of generating predictions can be substantially faster than sampling
since computing the $\argmax$ does not require computation of the partition function $\mathbb{Z}(\x)$
which can be expensive in structured output prediction.
From Table~\ref{tab:detmap}, we see that the loss of the deterministic predictor is typically not far from the loss of the stochastic policy,
but often slightly better.
\begin{table}[ht]
\vspace{-0.15in}
\caption{Mean Hamming loss of MAP predictions from the policies in Table~\ref{tab:results}.
\algnames$_{map}$ is not significantly \emph{worse} than \algnames\ (one-sided paired difference t-test, significance level 0.05).}
\label{tab:detmap}
\begin{center}
\begin{small}
\begin{tabular}{|l||c|c|c|c|c|}
\hline
 & Scene & Yeast & TMC & LYRL \\
\hline
\algnames($\mathcal{S}$)	& 1.143 & 4.517 & 2.522 & 0.996 \\
\algnames($\mathcal{S}$)$_{map}$& 1.143 & 4.065 & 2.299 & 0.880 \\
\hline
\end{tabular}
\end{small}
\end{center}
\vspace{-0.2in}
\end{table}
\subsection{How does generalization improve with size of $\sample$?}
\begin{figure}[ht]
\begin{center}
\begin{tikzpicture}
  \begin{semilogxaxis}[legend style={at={(0.1,0.4)},anchor=west}, log basis x=2, xlabel=$ReplayCount$,
  ylabel=$\risk{\w}$, xmin=1, xmax=256, samples at={0,...,8}, width=0.48\textwidth, height=0.3\textwidth]
  \addplot[mark=none, color=red, very thick] coordinates { (1,4.183) (2,4.183) (4,4.183) (8,4.183) (16,4.183) (32,4.183) (64,4.183) (128,4.183) (256,4.183)};
  \addplot[mark=none, color=blue, very thick] coordinates { (1,2.810) (2,2.810) (4,2.810) (8,2.810) (16,2.810) (32,2.810) (64,2.810) (128,2.810) (256,2.810)};
  \addplot coordinates { (1,4.080) (2,4.140) (4,3.982) (8,4.004) (16,3.863) (32,3.672) (64,3.515) (128,3.451) (256,3.294)};
  \legend{$\piexp$, CRF, \algnames($\mathcal{S}$)}
  \end{semilogxaxis}
\end{tikzpicture}
\vspace{-0.1in}
\caption{Generalization performance of \algnames($\mathcal{S}$) as a function of $\n$ on the Yeast dataset.
Even with $ReplayCount = 2^8$, \algnames($\mathcal{S}$) is learning from much less information
than the CRF (each supervised label conveys $2^{14}$ bandit label feedbacks).}
\label{fig:learningrate}
\end{center}
\vspace{-0.1in}
\end{figure}
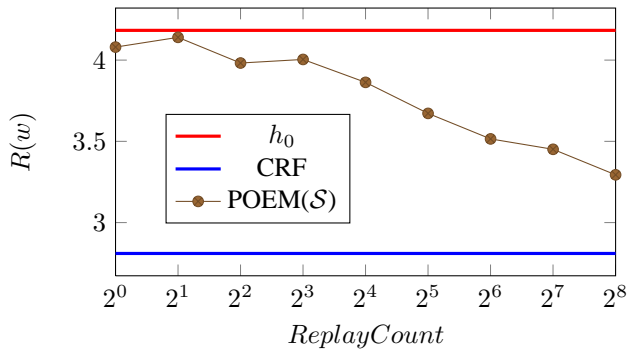
As we collect more data under $\piexp$, our generalization error bound indicates that prediction performance
should eventually approach that of the optimal hypothesis in the hypothesis space. 
We can simulate $\n \rightarrow \infty$ by replaying
the training data multiple times, collecting samples $\y \sim \piexpp{\x}$.
In the limit, we would observe every possible $\y$ in the bandit feedback dataset,
since $\piexpp{\x}$ has non-zero probability of exploring each prediction $\y$.
However, the learning rate may be slow, since the exponential model family
has very thin tails, and hence may not be an ideal logging distribution to learn from.
Holding all other details of the experiment setup fixed, we vary the number of times we replayed the training set ($ReplayCount$)
to collect samples from $\piexp$,
and report the performance of \algnames($\mathcal{S}$) on the Yeast dataset in Figure~\ref{fig:learningrate}.
\subsection{How does quality of $\piexp$ affect learning?}
In this experiment, we change the fraction of the training set $f \cdot \n_{train}$ that was used to train
the logging policy; as $f$ is increased, the quality of $\piexp$ improves.
Intuitively, there's a trade-off: better $\piexp$ probably samples correct predictions more often
and so produces a higher quality $\sample$ to learn from, but it should also be harder to beat $\piexp$.
\begin{figure}[ht]
\begin{center}
\begin{tikzpicture}
  \begin{axis}[legend style={at={(0.6,0.75)},anchor=west}, xlabel=$f$,
  ylabel=$\risk{\w}$, xmin=0.01, xmax=1.0, width=0.48\textwidth, height=0.3\textwidth]
  \addplot coordinates { (0.01,6.530) (0.05,5.520) (0.1,5.058) (0.2,4.786) (0.35,4.605) (0.5,4.502) (0.75,4.405) (1.0,4.344)};
  \addplot coordinates { (0.01,5.560) (0.05,4.639) (0.1,4.360) (0.2,4.317) (0.35,4.016) (0.5,4.133) (0.75,4.071) (1.0,4.105)};
  \legend{$\piexp$, \algnames($\mathcal{S}$)}
  \end{axis}
\end{tikzpicture}
\vspace{-0.1in}
\caption{Performance of \algnames($\mathcal{S}$) on the Yeast dataset as $\piexp$ is improved.
The fraction $f$ of the supervised training set used to train $\piexp$
is varied to control $\piexp$'s quality.
$\piexp$ performance does not reach CRF when $f=1$
because we do not tune hyper-parameters, and we report its expected loss, not the loss of its MAP prediction.}
\label{fig:logging_quality}
\end{center}
\vspace{-0.1in}
\end{figure}
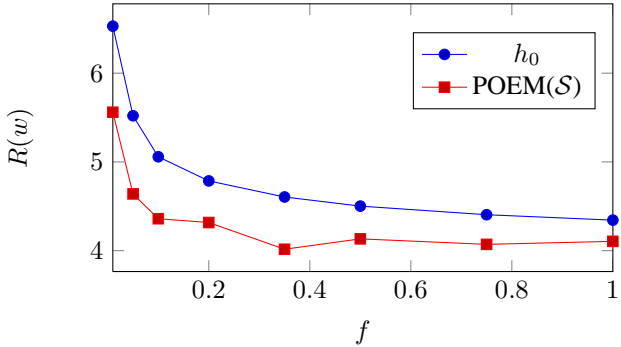
We vary $f$ from $1\%$ to $100\%$ while keeping all other conditions identical to the original experiment setup
in Figure~\ref{fig:logging_quality},
and find that \algnames($\mathcal{S}$) is able to consistently find a hypothesis at least as good as $\piexp$.
Moreover, even $\sample$ collected from a poor quality $\piexp$ ($0.5 \le f \le 0.2$) allows \algnames($\mathcal{S}$) to
effectively learn an improved policy.
 \subsection{How does stochasticity of $\piexp$ affect learning?}
\label{sec:stochastic_piexp}
Finally, the theory suggests that counterfactual learning is only possible when $\piexp$
is sufficiently stochastic (the generalization bounds hold with high probability in the samples drawn from $\piexp$).
Does CRM degrade gracefully when this assumption is violated?
We test this by introducing the \emph{temperature} multiplier $\w \mapsto \alpha \w, \alpha > 0$
(as discussed in Section~\ref{sec:blalgorithm}) into the logging policy.
For $\piexp = \hp{\w_0}$, we scale $\w_0 \mapsto \alpha \w_0$, to derive a ``more deterministic'' variant of $\piexp$,
and generate $\sample \sim \hp{\alpha \w_0}$.
We report the performance of \algnames($\mathcal{S}$)
on the LYRL dataset in Figure~\ref{fig:logging_stochastic} as we change $\alpha \in \left[0.5, \dots, 32\right]$,
compared against $\piexp$, and the deterministic predictor -- $\piexp\ {map}$ -- derived from $\piexp$.
So long as there is some minimum amount of stochasticity in $\piexp$,
\algnames($\mathcal{S}$) is still able to find a $\w$ that improves upon $\piexp$ and $\piexp\ {map}$.
The margin of improvement is typically greater when $\piexp$ is more stochastic.
Even when $\piexp$ is too deterministic ($\alpha \ge 2^4$),
performance of \algnames($\mathcal{S}$) simply recovers $\piexp \ {map}$,
suggesting that the CRM principle indeed achieves robust learning.
\begin{figure}[ht]
\begin{center}
\begin{tikzpicture}
  \begin{semilogxaxis}[legend style={at={(0.6,0.7)},anchor=west}, log basis x=2, xlabel=$\alpha$,
  ylabel=$\risk{\w}$, xmin=0.5, xmax=32,width=0.48\textwidth, height=0.3\textwidth]
  \addplot coordinates { (0.5,1.702) (1,1.464) (2,1.197) (4,1.050) (8,0.994) (16,0.964) (32,0.958)};
  \addplot coordinates { (0.5,0.976) (1,0.944) (2,0.932) (4,0.902) (8,0.899) (16,0.916) (32,0.984)};
  \addplot coordinates { (0.5,0.960) (1,0.960) (2,0.960) (4,0.960) (8,0.960) (16,0.960) (32,0.960)};
  \legend{$\piexp$, \algnames($\mathcal{S}$), $\piexp \ {map}$}
  \end{semilogxaxis}
\end{tikzpicture}
\vspace{-0.2in}
\caption{Performance of \algnames($\mathcal{S}$) on the LYRL dataset as $\piexp$ becomes more deterministic.
For $\alpha \ge 2^5$, $\piexp \equiv \piexp\ {map}$ (within machine precision).}
\label{fig:logging_stochastic}
\end{center}
\vspace{-0.2in}
\end{figure}
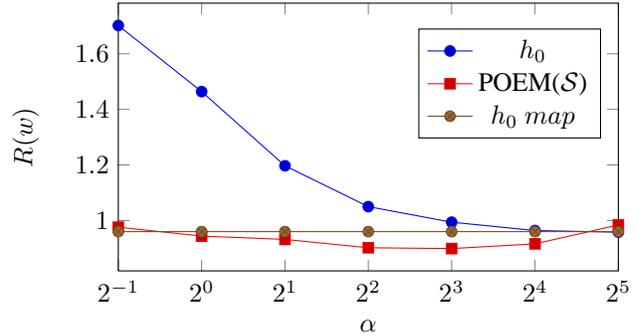

We observe the same trends (Figures~\ref{fig:learningrate}, \ref{fig:logging_quality} and \ref{fig:logging_stochastic}) across all datasets and optimization variants.
They also remain unchanged when we include $l2-$regularization (analogous to
supervised CRFs to capture the capacity of $\hypspace_{lin}$).

\section{Conclusion}
\label{sec:conclusion}
Counterfactual risk minimization serves as a robust principle
to design algorithms that can learn from a batch of bandit feedback interactions.
The key insight for CRM is to expand the classical notion of a hypothesis class to include stochastic policies,
reason about variance in the risk estimator,
and derive a generalization error bound over this hypothesis space.
The practical take-away is a simple, data-dependent regularizer that guarantees robust learning.
Following the CRM principle, we developed \algnames\ for structured output prediction.
\algnames\ can optimize over rich policy families
(exponential models corresponding to linear rules in supervised learning),
and deal with massive output spaces as efficiently as classical supervised methods.

The CRM principle more generally applies to supervised learning with non-differentiable losses,
since the objective does not require the gradient of the loss function.
We also foresee extensions of this work that relax some of the assumptions,
e.g., to handle noisy $\loss(\cdot, \cdot)$, and ordinal or co-active feedback, or adaptive $\piexp$ etc.

\section*{Acknowledgement} \label{sec:acknowledgement}
This research was funded in part through NSF Awards IIS-1247637 and IIS-1217686,
the JTCII Cornell-Technion Research Fund, and a gift from Bloomberg.
We thank Chenhao Tan, Karthik Raman and Vikram Rao for proofreading our manuscript,
and the anonymous reviewers of ICML for their constructive feedback.

\bibliography{PRM}
\bibliographystyle{icml2015}

\end{document}